	\newtheorem{corollary}{Corollary}
	\newtheorem{theorem}{Theorem}
\title{TRP:\ Trained Rank Pruning for Efficient Deep Neural Networks}
\author{
	Yuhui~Xu$^1$
	\and
	Yuxi~Li$^1$\and
	Shuai~Zhang$^2$\and
	Wei Wen$^3$\and
	Botao~Wang$^2$\and\\
	Yingyong~Qi$^2$\and
	Yiran Chen$^3$\and
	Weiyao~Lin$^1$\and
	Hongkai~Xiong$^1$
	\affiliations
	$^1$Shanghai Jiao Tong University\quad
	$^2$Qualcomm AI Research\quad
	$^3$Duke University
	\emails
	\{yuhuixu, lyxok1, wylin, xionghongkai\}@sjtu.edu.cn,\\
	\{shuazhan, botaow, yingyong\}@qti.qualcomm.com,
	\{wei.wen, yiran.chen\}@duke.edu
}
\begin{document}
	
	\maketitle
	
	\begin{abstract}
	    \let\thefootnote\relax\footnote{This work was supported in part by the National Natural Science Foundation of China under Grants 61720106001, 61932022, and in part by the Program of Shanghai Academic Research Leader under Grant 17XD1401900.}
	    
		To enable DNNs on edge devices like mobile phones, low-rank approximation has been widely adopted because of its solid theoretical rationale and efficient implementations. Several previous works attempted to directly approximate a pre-trained model by low-rank decomposition; however, small approximation errors in parameters can ripple over a large prediction loss. As a result, performance usually drops significantly and a sophisticated effort on fine-tuning is required to recover accuracy.
		Apparently, it is not optimal to separate low-rank approximation from training. Unlike previous works, this paper integrates low rank approximation and regularization into the training process. We propose Trained Rank Pruning (TRP), which alternates between low rank approximation and training. TRP maintains the capacity of the original network while imposing low-rank constraints during training. A nuclear regularization optimized by stochastic sub-gradient descent is utilized to further promote low rank in TRP. The TRP trained network inherently has a low-rank structure, and is approximated with negligible performance loss, thus eliminating the fine-tuning process after low rank decomposition. The proposed method is comprehensively evaluated on CIFAR-10 and ImageNet, outperforming previous compression methods using low rank approximation.
	\end{abstract}
	
	\section{Introduction}
	
	Deep Neural Networks (DNNs) have shown remarkable success in many computer vision tasks. Despite the high performance in server-based DNNs powered by cutting-edge parallel computing hardware, most state-of-the-art architectures are not yet ready to be deployed on mobile devices due to the limitations on computational capacity, memory and power. 
	
	To address this problem, many network compression and acceleration methods have been proposed. Pruning based methods \cite{han2015learning,He_2017_ICCV,Liu2017learning,Luo2017ThiNetAF} explore the sparsity in weights and filters. Quantization based methods \cite{han2015learning,Zhou2016Incremental,Courbariaux2016BinaryNet,rastegari2016xnor,Xu2018DeepNN} reduce the bit-width of network parameters. Low-rank decomposition \cite{Denton2014Exploiting,jaderberg2014speeding,Guo2018Network,Wen2017Coordinating,Alvarez2017Compression} minimizes the channel-wise and spatial redundancy by decomposing the original network into a compact one with low-rank layers. In addition, efficient architectures \cite{Sandler2018MobileNetV2,Ma2018ShuffleNet} are carefully designed to facilitate mobile deployment of deep neural networks. Different from precedent works, this paper proposes a novel approach to design low-rank networks.
	
	Low-rank networks can be trained directly from scratch. However, it is difficult to obtain satisfactory results for several reasons. 
	(1)~\textit{Low capacity:}
	compared with the original full rank network, the capacity of a low-rank network is limited, which causes difficulties in optimizing its performances. 
	(2)~\textit{Deep structure:}
	low-rank decomposition typically doubles the number of layers in a network. The additional layers make numerical optimization much more vulnerable to gradients explosion and/or vanishing. 
	(3)~\textit{Heuristic rank selection:}
	the rank of decomposed network is often  chosen as a hyperparameter based on pre-trained
	networks; this may not be the optimal rank for the network trained from scratch.
	
	Alternatively, several previous works \cite{zhang2016accelerating,Guo2018Network,jaderberg2014speeding} attempted to decompose pre-trained models in order to get initial low-rank networks. However, the heuristically imposed low-rank could incur huge accuracy loss and network retraining is needed to recover the performance of the original network as much as possible. Some attempts were made to use sparsity regularization \cite{Wen2017Coordinating,chen2015compressing} to constrain the network into a low-rank space. Though sparsity regularization reduces the error incurred by decomposition to some extent, performance still degrades rapidly when compression rate increases.
	
	This paper is an extension of \cite{xu2018trained}. In this paper, we propose a new method, namely Trained Rank Pruning (TRP), for training low-rank networks. We embed the low-rank decomposition into the training process by gradually pushing the weight distribution of a well functioning network into a low-rank form, where all parameters of the original network are kept and optimized to maintain its capacity. We also propose a stochastic sub-gradient descent optimized nuclear regularization that further constrains the weights in a low-rank space to boost the TRP. 
	 The proposed solution is illustrated in Fig.~\ref{fig.1}.
	
	Overall, our contributions are summarized below.
	\begin{enumerate}
		\setlength\itemsep{-0em}
		\item A new training method called the TRP is presented by explicitly embedding the low-rank decomposition into the network training;
		\item A nuclear regularization is optimized by stochastic sub-gradient descent to boost the performance of the TRP;
		\item Improving inference acceleration and reducing approximation accuracy loss in both channel-wise and spatial-wise decomposition methods.  
	\end{enumerate}
	
	\begin{figure*}[h]
		\centering
		\includegraphics[width=5.5in]{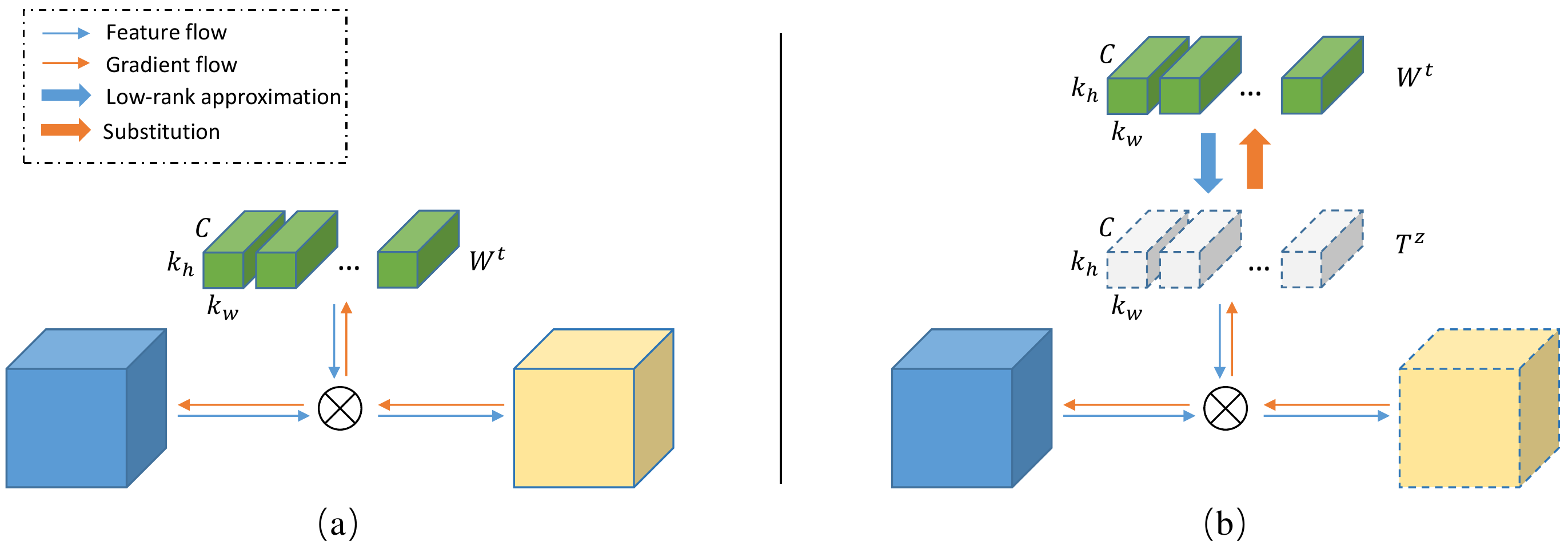}
		\caption{The training of TRP consists of two parts as illustrated in (a) and (b). (a) one normal iteration with forward-backward broadcast and weight update. (b) one training iteration inserted by TRP, where the low-rank approximation is first applied on filters before convolution. During backward propagation, the gradients are directly added on low-rank filters and the original weights are substituted by updated low-rank filters. (b) is applied once every $m$ iterations (\textit{i.e.} when gradient update iteration $t=zm, z=0, 1, 2, \cdots$), otherwise (a) is applied.}\label{fig.1}
	\end{figure*}
	\section{Related Works}
	
	A lot of works have been proposed to accelerate the inference process of deep neural networks. Briefly, these works could be categorized into three main categories: quantization, pruning, and low-rank decomposition.

	\textbf{Quantization} Weight quantization methods include training a quantized model from scratch \cite{chen2015compressing,Courbariaux2016BinaryNet,rastegari2016xnor} or converting a pre-trained model into quantized representation \cite{Zhou2016Incremental,Han2015DeepCC,Xu2018DeepNN}. The quantized weight representation includes binary value \cite{rastegari2016xnor,Courbariaux2016BinaryNet} or hash buckets \cite{chen2015compressing}. 
	Note that our method is inspired by the scheme of combining quantization with training process, \textit{i.e.} we embed the low-rank decomposition into training process to explicitly guide the parameter to a low-rank form.
	
	\textbf{Pruning} Non-structured and structured sparsity are introduced by pruning. \cite{han2015learning} proposes to prune unimportant connections between neural units with small weights in a pre-trained CNN. \cite{Wen2016LearningSS} utilizes group Lasso strategy to learn the structure sparsity of networks. \cite{Liu2017learning} adopts a similar strategy by explicitly imposing scaling factors on each channel to measure the importance of each connection and dropping those with small weights. In \cite{He_2017_ICCV}, the pruning problem is formulated as a data recovery problem. Pre-trained filters are re-weighted by minimizing a data recovery objective function. Channels with smaller weight are pruned. \cite{Luo2017ThiNetAF} heuristically selects filters using  change of next layer's output as a criterion.

	\textbf{Low-rank decomposition} Original models are decomposed into compact ones with more lightweight layers.
	\cite{jaderberg2014speeding} considers both the spatial-wise and channel-wise redundancy and proposes decomposing a filter into two cascaded asymmetric filters. \cite{zhang2016accelerating} further assumes the feature map lie in a low-rank subspace and decompose the convolution filter into $k\times k$ followed by $1\times 1$ filters via SVD. \cite{Guo2018Network} exploits the low-rank assumption of convolution filters and decompose a regular convolution into several depth-wise and point-wise convolution structures. Although these works achieved notable performance in network compression, all of them are based on the low-rank assumption. When such assumption is not completely satisfied, large prediction error may occur.
	
	Alternatively, some other works \cite{Wen2017Coordinating,Alvarez2017Compression} implicitly utilize sparsity regularization to direct the neural network training process to learn a low-rank representation. Our work is similar to this low-rank regularization method. However, in addition to appending an implicit regularization during training, we impose an explicit sparsity constraint in our training process and prove that our approach can push the weight distribution into a low-rank form quite effectively.
	
	\section{Methodology}
	\subsection{Preliminaries}
	Formally, the convolution filters in a layer can be denoted by a tensor $W \in \mathbb{R}^{n \times c \times k_w \times k_h}$, where $ n $ and $ c $ are the number of filters and input channels, $ k_h $ and $ k_w $ are the height and width of the filters. An input of the convolution layer $ F_i \in  \mathbb{R}^{c \times x \times y}$ generates an output as $ F_o = W * F_i $. Channel-wise correlation \cite{zhang2016accelerating} and spatial-wise correlation \cite{jaderberg2014speeding} are explored to approximate convolution filters in a low-rank space. In this paper, we focus on these two decomposition schemes. 
	However, unlike the previous works, we propose a new training scheme TRP to obtain a low-rank network without re-training after decomposition.

	\subsection{Trained Rank Pruning}
	Trained Rank Pruning (TRP) is motivated by the strategies of training quantized nets.
	One of the gradient update schemes to train quantized networks from scratch \cite{Li2017TrainingQN} is
	
	\begin{equation}\label{equ.1}
	w^{t+1}=Q(w^t-\alpha \triangledown f(w^t))
	\end{equation}
	where $Q(\cdot)$ is the quantization function, $w^t$ denote the parameter in the $t^{th}$ iteration. Parameters are quantized by $Q(\cdot)$ before updating the gradients.
	
	In contrast, we propose a simple yet effective training scheme called Trained Rank Pruning (TRP) in a periodic fashion:
	
	\begin{equation}\label{equ.2}
	\begin{split}
	W^{t+1}&=\left \{ 
	\begin{array}{c}
	W^t - \alpha \triangledown f(W^t)  \quad t \% m \neq 0 \\
	T^z - \alpha \triangledown f(T^z)  \quad t \% m = 0 
	\end{array} \right. \\
	T^{z}&=\mathcal{D}(W^{t}), \quad z = t / m
	\end{split}
	\end{equation}
	where $\mathcal{D}({\cdot})$ is a  low-rank  tensor approximation operator, $\alpha$ is the learning rate, $t$ indexes the iteration and $z$ is the iteration of the operator $\mathcal{D}$ , with $m$ being the period for the low-rank approximation.
	
	At first glance, this TRP looks very simple. An immediate concern arises: can the iterations guarantee the rank of the parameters converge, and more importantly would not increase when they are updated in this way? A positive answer (see Theorem 2) given in our theoretical analysis will certify the legitimacy of this algorithm.
	
	For the network quantization, if the gradients are smaller than the quantization, the gradient information would be totally lost and become zero. However, it will not happen in TRP because the low-rank operator is applied on the weight tensor. Furthermore, we apply low-rank approximation every $m$ SGD iterations. This saves training time to a large extend. As illustrated in Fig.~\ref{fig.1}, for every $m$ iterations, we perform low-rank approximation on the original filters, while gradients are updated on the resultant low-rank form. Otherwise, the network is updated via the normal SGD. 
	Our training scheme could be combined with any low-rank operators. In the proposed work, we choose the low-rank techniques proposed in \cite{jaderberg2014speeding} and \cite{zhang2016accelerating}, both of which transform the 4-dimensional filters into 2D matrix and then apply the truncated singular value decomposition (TSVD). The SVD of matrix ${W}^t$ can be written as:
	\begin{equation}\label{equ.3}
	W^t=\sum_{i=1}^{rank(W^t)}\sigma_i\cdot U_i\cdot (V_i)^T
	\end{equation}
	where $\sigma_i$ is the singular value of $W^t$ with $\sigma_1\geq \sigma_2 \geq \cdots \geq \sigma_{rank(W^t)}$, and $U_i$ and $V_i$ are the singular vectors. The parameterized TSVD($W^t;e$) is to find the smallest integer $k$ such that
	\begin{equation}\label{equ.4}
	\sum_{j=k+1}^{rank(W^t)}(\sigma_j)^2 \leq \quad e \sum_{i=1}^{rank(W^t)}(\sigma_i)^2
	\end{equation}
	where $e$ is a pre-defined hyper-parameter of the energy-pruning ratio, $e \in(0 , 1)$.

	After truncating the last $n-k$ singular values, we transform the low-rank 2D matrix  back to 4D tensor. Compared with directly training low-rank structures from scratch, the proposed TRP has following advantages.
	
	(1) Unlike updating the decomposed filters independently of the network training in literature \cite{zhang2016accelerating,jaderberg2014speeding}, we update the network directly on the original 4D shape of the decomposed parameters, which enable jointly network decomposition and training by preserving its discriminative capacity as much as possible.
	
	(2) Since the gradient update is  performed based on the original network structure, there will be no exploding and vanishing gradients problems caused by additional layers.
	
	(3) The rank of each layer is automatically selected during the training. We will prove a theorem certifying the rank of network weights convergence and would not increase in section \ref{sec:thm}.
	
	
	\subsection{Nuclear Norm Regularization}
	Nuclear norm is widely used in matrix completion problems. Recently, it is introduced to constrain the network into low-rank space during the training process \cite{Alvarez2017Compression}.
	\begin{equation}\label{equ.5}
	\min \left\{  f\left(x; w\right)+\lambda \sum_{l=1}^L||W_l||_* \right\}
	\end{equation}
	where $f(\cdot)$ is the objective loss function, nuclear norm $||W_l||_*$ is defined as $||W_l||_*=\sum_{i=1}^{rank(W_l)}\sigma_l^i$, with $\sigma_l^i$ the singular values of $W_l$. $\lambda$ is a hyper-parameter setting the influence of the nuclear norm. In \cite{Alvarez2017Compression} the proximity operator is applied in each layer independently to solve Eq.~(\ref{equ.5}). However, the proximity operator is split from the training process and doesn't consider the influence within layers.
	
	In this paper, we utilize stochastic sub-gradient descent \cite{Avron2012EfficientAP} to optimize nuclear norm regularization in the training process. Let $W=U\Sigma V^T$ be the SVD of $W$ and let $U_{tru}, V_{tru}$ be $U, V$ truncated to the first $rank(W)$ columns or rows, then $U_{tru}V_{tru}^T$ is the sub-gradient of $||W||_*$ \cite{watson1992characterization}. Thus, the sub-gradient of  Eq.~(\ref{equ.5}) in a layer is 
	\begin{equation}\label{equ.6}
	\triangledown f+\lambda U_{tru}V_{tru}^T
	\end{equation}
	
	The nuclear norm and loss function are optimized simultaneously during the training of the networks and can further be combined with the proposed TRP.
	
	\subsection{Theoretic Analysis}\label{sec:thm}
	In this section, we analyze the rank convergence of TRP from the perspective of matrix perturbation theory \cite{stewart1990matrix}. We prove that rank in TRP is monotonously decreasing, \textit{i.e.,} the model gradually converges to a more sparse model.
	
	Let A be an $m\times n$ matrix, without loss of generality, $m\geq n$. 
	$\Sigma = diag\left(\sigma_1,\cdots,\sigma_n\right)$ and $\sigma_1\geq \sigma_2\geq\cdots\geq\sigma_n$. $\Sigma$ is the diagonal matrix composed by all singular values of $A$. 
	Let $\widetilde{A}=A+E$ be a perturbation of $A$, and $E$ is the noise matrix. 
	$\widetilde{\Sigma} = diag\left(\widetilde{\sigma}_1,\cdots,\widetilde{\sigma}_n\right)$ and $\widetilde{\sigma}_1 \geq \widetilde{\sigma}_2 \geq \cdots \geq \widetilde{\sigma}_n$. $\widetilde{\sigma}_i$ is the singular values of $\widetilde{A}$.
	The basic perturbation bounds for the singular values of a matrix are given by
	
	\begin{theorem}\label{theorem1}
		Mirsky's theorem \cite{mirsky1960symmetric}:
		\begin{equation}\label{equ.9}
		\sqrt{\sum_{i}|\widetilde{\sigma}_i-\sigma_i|^2}\leq||E||_F\\
		\end{equation}
	\end{theorem}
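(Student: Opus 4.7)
The plan is to reduce Mirsky's bound on singular values to the classical Hoffman--Wielandt inequality on eigenvalues of Hermitian matrices, via the standard Hermitian dilation trick that converts a rectangular matrix into a symmetric one.

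First, I would introduce the symmetric dilation
$$H(X)=\begin{pmatrix} 0 & X \\ X^{T} & 0 \end{pmatrix}\in\mathbb{R}^{(m+n)\times(m+n)}.$$
A short block computation shows that the eigenvalues of $H(X)$ are exactly $\pm\sigma_{1}(X),\ldots,\pm\sigma_{n}(X)$ together with $(m-n)$ additional zeros, and that $\|H(X)\|_{F}^{2}=2\|X\|_{F}^{2}$. Since $H$ is linear, $H(\widetilde{A})=H(A)+H(E)$ is a symmetric perturbation of $H(A)$ whose spectrum is tied to the singular values of $\widetilde{A}$ in the same explicit way.

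Second, I would apply the Hoffman--Wielandt inequality to $H(A)$ and $H(\widetilde{A})$: for symmetric $H_{1},H_{2}$ with eigenvalues sorted in a common decreasing order as $\lambda_{1}\ge\cdots\ge\lambda_{N}$ and $\mu_{1}\ge\cdots\ge\mu_{N}$, one has $\sum_{i}(\lambda_{i}-\mu_{i})^{2}\le\|H_{1}-H_{2}\|_{F}^{2}$. For the dilated matrices the sorted eigenvalue lists are $\sigma_{1}\ge\cdots\ge\sigma_{n}\ge 0=\cdots=0\ge-\sigma_{n}\ge\cdots\ge-\sigma_{1}$, and similarly with tildes, so that sorted matching yields left-hand side exactly $2\sum_{i=1}^{n}(\widetilde{\sigma}_{i}-\sigma_{i})^{2}$ (zero eigenvalues pair with zero eigenvalues and cancel). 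Combined with $\|H(E)\|_{F}^{2}=2\|E\|_{F}^{2}$, cancelling the factor of $2$ and taking square roots yields exactly the claimed bound.

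The main obstacle is justifying the Hoffman--Wielandt inequality itself, which carries the real analytical content. I would prove it by writing the spectral decompositions $H_{k}=U_{k}D_{k}U_{k}^{T}$ and expanding
$$\|H_{1}-H_{2}\|_{F}^{2}=\|D_{1}-QD_{2}Q^{T}\|_{F}^{2},\quad Q=U_{1}^{T}U_{2},$$
then observing that the matrix with entries $Q_{ij}^{2}$ is doubly stochastic and invoking Birkhoff's theorem to reduce the infimum over orthogonal $Q$ to an infimum over permutation matrices, for which the rearrangement inequality selects the sorted pairing. A smaller subtlety is confirming that the $m-n$ spurious zero eigenvalues contributed by the dilation occupy identical positions in the sorted spectra of both $H(A)$ and $H(\widetilde{A})$, so they contribute nothing to the sum of squared differences; because zeros sit precisely in the middle of each sorted list, this is automatic.
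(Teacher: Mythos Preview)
Your argument is correct and follows the standard route to Mirsky's inequality: pass to the symmetric dilation, invoke Hoffman--Wielandt, and unwind. The dilation spectrum, the factor of $2$ on both sides, and the pairing of the sorted eigenvalue lists are all handled accurately; the remark that the $(m-n)$ zero eigenvalues occupy identical central positions in both sorted lists is exactly the right observation to dispose of that subtlety. Your sketch of Hoffman--Wielandt via Birkhoff's theorem and the rearrangement inequality is also the classical proof.

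However, there is nothing to compare against: the paper does not supply a proof of Theorem~\ref{theorem1}. Mirsky's theorem is quoted as a known perturbation bound with a citation to \cite{mirsky1960symmetric}, and is then used (through Corollary~\ref{corollary1}) as a black box in the proof of Theorem~\ref{thm:2}. So your proposal goes well beyond what the paper does for this statement; it provides a genuine, self-contained proof where the paper simply invokes the literature.
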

	
	where $||\cdot||_F$ is the Frobenius norm. Then the following corollary can be inferred from Theorem \ref{theorem1}, 
	
	\begin{corollary}\label{corollary1}
		Let $B$ be any $m\times n$ matrix of rank not greater than $k$, \textit{i.e.} the singular values of B can be denoted by $\varphi_1 \geq \cdots \geq \varphi_k \geq 0$ and $\varphi_{k+1}=\cdots=\varphi_n=0$. Then
		\begin{equation}\label{equ.10}
		||B-A||_F \geq \sqrt{\sum_{i=1}^{n}|\varphi_i-\sigma_i|^2}\geq \sqrt{\sum_{j=k+1}^{n}\sigma_{j}^2}\\
		\end{equation}
	\end{corollary}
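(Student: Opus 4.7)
The plan is to derive Corollary \ref{corollary1} directly from Theorem \ref{theorem1} (Mirsky's theorem), treating $B$ as a perturbation of $A$. Concretely, I would set $\widetilde{A} = B$ and $E = B - A$, so that $\widetilde{\sigma}_i = \varphi_i$ for $i = 1, \ldots, n$, and then apply Theorem \ref{theorem1} verbatim to obtain
\begin{equation*}
\sqrt{\sum_{i=1}^{n} |\varphi_i - \sigma_i|^2} \;\leq\; \|E\|_F \;=\; \|B - A\|_F,
\end{equation*}
which gives the first inequality of the corollary.

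For the second inequality, I would exploit the rank hypothesis on $B$. Since $\operatorname{rank}(B) \le k$, the singular values satisfy $\varphi_{k+1} = \cdots = \varphi_n = 0$. Dropping the first $k$ (nonnegative) summands and substituting $\varphi_j = 0$ for $j \ge k+1$ yields
\begin{equation*}
\sqrt{\sum_{i=1}^{n} |\varphi_i - \sigma_i|^2} \;\geq\; \sqrt{\sum_{j=k+1}^{n} |\varphi_j - \sigma_j|^2} \;=\; \sqrt{\sum_{j=k+1}^{n} \sigma_j^2},
\end{equation*}
which is precisely the second inequality.

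There is no real obstacle in the argument: the corollary is essentially a specialization of Mirsky's theorem, and the only nontrivial input is the rank constraint on $B$, which is used solely to zero out the tail singular values $\varphi_{k+1}, \ldots, \varphi_n$. The one minor point to be careful about is that Theorem \ref{theorem1} is stated under the convention $m \ge n$ (so that $A$ has exactly $n$ singular values), and one should confirm that $B$ also admits $n$ singular values (allowing zeros) under the same convention; this is immediate since $B$ has the same shape as $A$. No further perturbation machinery or optimization argument is required.
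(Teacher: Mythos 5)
Your proof is correct and matches the paper's intent: the paper states only that the corollary ``can be inferred from Theorem \ref{theorem1},'' and your derivation --- applying Mirsky's theorem with $\widetilde{A}=B$, $E=B-A$, then dropping the first $k$ terms and using $\varphi_{k+1}=\cdots=\varphi_n=0$ --- is exactly that standard inference.
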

	
	Below, we will analyze the training procedure of the proposed TRP. Note that $W$ below are all transformed into 2D matrix. In terms of Eq.~(\ref{equ.2}), the training process between two successive TSVD operations can be rewritten as Eq. (\ref{equ.11})
	\begin{equation}\label{equ.11}
	\begin{split}
	W^t&=T^z=TSVD(W^t;e)\\
	W^{t+m} &= T^z-\alpha \Sigma_{i=0}^{m-1}\triangledown f(W^{t+i})\\
	T^{z+1}&=TSVD(W^{t+m};e)
	\end{split}
	\end{equation}
	where $W^t$ is the weight matrix in the $t$-th iteration. $T^z$ is the weight matrix  after applying TSVD over $W^t$. $\triangledown f(W^{t+i})$ is the gradient back-propagated during the $(t+i)$-th iteration. $e\in\left(0,1\right)$ is the predefined energy threshold. Then we have following theorem.
	\begin{theorem}\label{thm:2}
		Assume that $||\alpha \triangledown f||_F$ has an upper bound G, if  $G<\frac{\sqrt{e}}{m}||W^{t+m}||_F$, then $rank(T^z)\geq rank(T^{z+1})$. 
	\end{theorem}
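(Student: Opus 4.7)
The plan is to invoke Corollary~\ref{corollary1} with a carefully chosen rank-constrained matrix $B$, reducing the claim to the observation that the TSVD energy criterion is already met at the previous rank cutoff. Set $r := rank(T^z)$. From the first line of Eq.~(\ref{equ.11}), the perturbation between two successive TSVD states is
\begin{equation*}
W^{t+m} - T^z \;=\; -\alpha \sum_{i=0}^{m-1} \triangledown f(W^{t+i}),
\end{equation*}
so by the triangle inequality and the uniform bound $\|\alpha \triangledown f\|_F \leq G$ I get $\|W^{t+m} - T^z\|_F \leq m G$.

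Now apply Corollary~\ref{corollary1} with $A := W^{t+m}$ and $B := T^z$, whose rank is by construction at most $r$. Denoting the singular values of $W^{t+m}$ by $\widetilde{\sigma}_1 \geq \cdots \geq \widetilde{\sigma}_n$, the corollary yields
\begin{equation*}
\sum_{j=r+1}^{n} \widetilde{\sigma}_j^2 \;\leq\; \|T^z - W^{t+m}\|_F^{2} \;\leq\; m^{2} G^{2}.
\end{equation*}
Squaring the hypothesis $G < \tfrac{\sqrt{e}}{m}\|W^{t+m}\|_F$ gives $m^{2}G^{2} < e \|W^{t+m}\|_F^{2} = e \sum_{i}\widetilde{\sigma}_i^{2}$. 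Chaining the inequalities, $\sum_{j=r+1}^{n}\widetilde{\sigma}_j^{2} < e \sum_{i}\widetilde{\sigma}_i^{2}$, which is exactly the TSVD pruning condition of Eq.~(\ref{equ.4}) evaluated at cutoff $k = r$. Since $rank(T^{z+1})$ is defined as the \emph{smallest} such integer, we conclude $rank(T^{z+1}) \leq r = rank(T^{z})$.

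The main obstacle is conceptual rather than computational: one has to pick $B = T^{z}$ itself (not an arbitrary rank-$r$ approximation of $W^{t+m}$) when applying Corollary~\ref{corollary1}, because only this choice converts the Frobenius gap on the left-hand side into the accumulated SGD drift, which is the unique quantity controlled by the hypothesis on $G$. Once that identification is made, everything else reduces to a triangle-inequality estimate and the definition of the energy-pruning threshold; note that Corollary~\ref{corollary1} only requires $rank(B) \leq r$ rather than equality, and it is precisely this slack that the argument exploits.
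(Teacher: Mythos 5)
Your proof is correct and takes essentially the same route as the paper's: both apply Corollary~\ref{corollary1} with $A = W^{t+m}$ and $B = T^z$, bound the accumulated drift $\|\alpha\sum_{i}\triangledown f^{t+i}\|_F$ by $mG$ via the triangle inequality, and invoke the hypothesis on $G$ to show the tail energy past index $rank(T^z)$ already satisfies the TSVD criterion, forcing $rank(T^{z+1})\leq rank(T^z)$. Your write-up is, if anything, slightly cleaner in appealing directly to the minimality of $k$ in the TSVD definition.
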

	
	\begin{proof}

		We denote $\sigma^t_{j}$ and $\sigma^{t+m}_{j}$ as the singular values of $W^t$ and $W^{t+m}$ respectively. Then at the $t$-th iteration, given the energy ratio threshold $e$, the TSVD operation tries to find the singular value index $k \in [0, n-1]$ such that :
		
		\begin{equation}\label{equ.12}
		\begin{split}
		\sum_{j=k+1}^{n}\left(\sigma^{t}_{j}\right)^2&<e||W^t||_F^2\\
		\sum_{j=k}^{n}\left(\sigma^t_{j}\right)^2&\geq e||W^t||_F^2\\
		\end{split}
		\end{equation}
		
		In terms of Eq.~(\ref{equ.12}), $T^z$ is a $k$ rank matrix, i.e, the last $n-k$ singular values of $T^z$ are equal to $0$. According to Corollary \ref{corollary1}, we can derive that:
		\begin{equation}\label{equ.13}
		\begin{split}
		||W^{t+m}-T^z||_F& = ||\alpha \sum_{i=0}^{m-1}\triangledown f^{t+i}||_F\\
		&\geq \sqrt{\sum_{j=k+1}^{n}\left(\sigma^{t+m}_{j}\right)^2} \\
		\end{split}
		\end{equation}
		
		Given the assumption $G < \frac{\sqrt{e}}{m} ||W^{t+m}||_F $, we can get:
		\begin{equation}\label{equ.14}
		\begin{split}
		\frac{\sqrt{\sum_{j=k+1}^{n}\left(\sigma^{t+m}_{j}\right)^2}}{||W^{t+m}||_F}&\leq \frac{||\alpha \sum_{i=0}^{m-1}\triangledown f^{t+i}||_F}{||W^{t+m}||_F}\\
		&\leq \frac{\sum_{i=0}^{m-1}||\alpha \triangledown f^{t+i}||_F}{||W^{t+m}||_F}\\
		&\leq \frac{mG}{||W^{t+m}||_F}< \sqrt{e} \\
		\end{split}
		\end{equation}

		
		Eq.~(\ref{equ.14}) indicates that 
		since the perturbations of singular values are bounded by the parameter gradients, if we properly select the TSVD energy ratio threshold $e$, we could guarantee that if $n-k$ singular values are pruned by previous TSVD iteration, then before the next TSVD, the energy for the last $n-k$ singular values is still less than the pre-defined energy threshold $e$. Thus TSVD should keep the number of pruned singular values or drop more to achieve the criterion in Eq.~(\ref{equ.12}), consequently a weight matrix with lower or same rank is obtained, \textit{i.e.} $Rank(T^{z})\geq Rank(T^{z+1})$. We further confirm our analysis about the variation of rank distribution in Section \ref{sec:exp}.
	\end{proof}
	
	\begin{table}[htb]
		\begin{center}
			\begin{tabular}{|l|c|c|c|}
				\hline
				Model & Top 1 ($\%$) & Speed up\\
				\hline\hline
				R-20 (baseline)&91.74&1.00$\times$\\
				\hline
				R-20 (TRP1)&90.12&1.97$\times$\\
				R-20 (TRP1+Nu)&\textbf{90.50}&\textbf{2.17$\times$}\\
				R-20 (\cite{zhang2016accelerating})&88.13&1.41$\times$\\
				\hline
				R-20 (TRP2)&90.13&2.66$\times$\\
				R-20 (TRP2+Nu)&\textbf{90.62}&\textbf{2.84$\times$}\\
				R-20 (\cite{jaderberg2014speeding})&89.49&1.66$\times$\\
				\hline
				\hline
				R-56 (baseline)&93.14&1.00$\times$\\
				\hline
				R-56 (TRP1)&\textbf{92.77}&2.31$\times$\\
				R-56 (TRP1+Nu)&91.85&\textbf{4.48$\times$}\\
				R-56 (\cite{zhang2016accelerating})&91.56&2.10$\times$\\
				\hline
				R-56 (TRP2)&\textbf{92.63}&2.43$\times$\\
				R-56 (TRP2+Nu)&91.62&\textbf{4.51$\times$}\\
				R-56 (\cite{jaderberg2014speeding})&91.59&2.10$\times$\\
				\hline
				R-56 \cite{He_2017_ICCV}&91.80&2.00$\times$\\
				R-56 \cite{li2016pruning}&91.60&2.00$\times$\\
				\hline
			\end{tabular}
		\end{center}
		\caption{Experiment results on CIFAR-10. "R-`` indicates ResNet-. }\label{tab1}\vspace*{-0.4cm}
	\end{table}
	
	\section{Experiments}\label{sec:exp}
	
	\subsection{Datasets and Baseline}
	We evaluate the performance of TRP scheme on two common datasets, CIFAR-10 \cite{AlexCifar10} and ImageNet \cite{Deng2009ImageNetAL}. The CIFAR-10 dataset consists of  colored natural images with $32\times 32$
	resolution and has totally 10 classes. The ImageNet dataset consists of 1000 classes of images for recognition task. 
	For both of the datasets, we adopt ResNet \cite{He2016DeepRL} as our baseline model since it is widely used in different vision tasks.
	We use ResNet-20, ResNet-56 for CIFAR-10 and ResNet-18, ResNet-50 for ImageNet. For evaluation metric, we adopt top-1 accuracy on CIFAR-10 and top-1, top-5 accuracy on ImageNet. To measure the acceleration performance, we compute the FLOPs ratio between baseline and decomposed models to obtain the final speedup rate. Wall-clock CPU and GPU time is also compared. Apart from the basic decomposition methods, we compare the performance with other state-of-the-art acceleration algorithms~\cite{He_2017_ICCV,li2016pruning,Luo2017ThiNetAF,Zhou_2019_ICCV}.
	
	\begin{table}[htb]
		\begin{center}
			\begin{tabular}{|l|c|c|c|}
				\hline
				Method &Top1($\%$)& Top5($\%$)& Speed up\\
				\hline\hline
				Baseline&69.10&88.94&1.00$\times$\\
				\hline
				TRP1 &\textbf{65.46}&\textbf{86.48} &1.81$\times$\\
				TRP1+Nu&65.39&86.37&\textbf{2.23$\times$}\\
				\cite{zhang2016accelerating}\footnotemark[1] &-& 83.69&1.39$\times$ \\
				\cite{zhang2016accelerating}&63.10&84.44&1.41$\times$\\
				\hline
				TRP2 &\textbf{65.51}&\textbf{86.74} &$2.60\times$\\
				TRP2+Nu&65.34&86.61&\textbf{3.18}$\times$\\
				\cite{jaderberg2014speeding}&62.80&83.72&2.00$\times$\\
				\hline
			\end{tabular}
		\end{center}
		\caption{Results of ResNet-18 on ImageNet. }\label{tab2}\vspace*{-0.2cm}
	\end{table}

	\begin{table}[htb]
		\begin{center}
			\begin{tabular}{|l|c|c|c|}
				\hline
				Method &Top1($\%$)& Top5($\%$) & Speed up\\
				\hline\hline
				Baseline&75.90&92.70&1.00$\times$\\
				\hline
				TRP1+Nu&72.69&91.41&\textbf{2.30}$\times$\\
				TRP1+Nu&\textbf{74.06}&\textbf{92.07}&1.80$\times$\\
				\cite{zhang2016accelerating}&71.80&90.2&1.50$\times$\\
				\cite{He_2017_ICCV}&-&90.80&2.00\\
				\cite{Luo2017ThiNetAF}&72.04&90.67&1.58\\
				\cite{luo2018thinet}&72.03&90.99&2.26\\
				\cite{Zhou_2019_ICCV}&71.50&90.20&2.30\\
				\hline
			\end{tabular}
		\end{center}
		\caption{Results of ResNet-50 on ImageNet.}\label{tab3}\vspace*{-0.4cm}
	\end{table}
	
	\subsection{Implementation Details}
	We implement our TRP scheme with NVIDIA 1080 Ti GPUs. For training on CIFAR-10, we start with base learning rate of $0.1$ to train 164 epochs and degrade the value by a factor of $10$ at the $82$-th and $122$-th epoch. For ImageNet, we directly finetune the model with TRP scheme from the pre-trained baseline with learning rate $0.0001$ for 10 epochs. We adopt SGD solver to update weight and set the weight decay value as $10^{-4}$ and momentum value as $0.9$. The accuracy improvement enabled by data dependent decomposition vanishes after fine-tuning. So we simply
	adopt the retrained data independent decomposition as our basic methods.
	\footnotetext[1]{the implementation of \cite{Guo2018Network}} 
	
	\subsection{Results on CIFAR-10}
	\textbf{Settings.} Experiments on channel-wise decomposition (TRP1) and spatial-wise decomposition (TRP2) are both considered. The TSVD energy threshold in TRP and TRP+Nu is $0.02$ and the nuclear norm weight $\lambda$ is set as $0.0003$. We decompose both the $1\times 1$ and $3\times 3$ layers in ResNet-56. 
	
	\textbf{Results.} As shown in Table~\ref{tab1}, for both spatial-wise and channel-wise decomposition, the proposed TRP outperforms basic methods~\cite{zhang2016accelerating,jaderberg2014speeding} on ResNet-20 and ResNet-56. Results become even better when nuclear regularization is used. For example, in the channel-wise decomposition (TRP2) of ResNet-56, results of TRP combined with nuclear regularization can even achieve $2\times$ speed up rate than \cite{zhang2016accelerating} with same accuracy drop. TRP also outperforms filter pruning~\cite{li2016pruning} and channel pruning~\cite{He_2017_ICCV}. The channel decomposed TRP trained ResNet-56 can achieve $92.77\%$ accuracy with $2.31\times$ acceleration, while \cite{He_2017_ICCV} is $91.80\%$ and \cite{li2016pruning} is $91.60\%$. With nuclear regularization, our methods can approximately double the acceleration rate of \cite{He_2017_ICCV} and \cite{li2016pruning} with higher accuracy.
	
	\begin{figure}[t]
		\centering
		\includegraphics[width=3in]{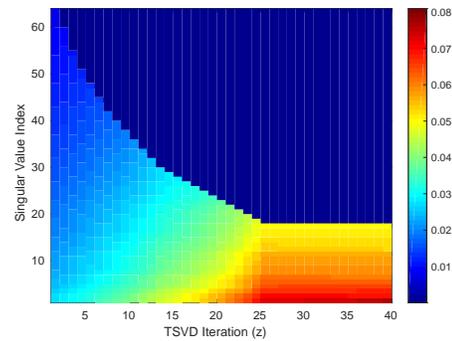}\\
		\caption{Visualization of rank selection, taken from the res3-1-2 convolution layer in ResNet-20 trained on CIFAR-10.}\label{fig.5}
		\vspace*{-0.4cm}
	\end{figure}

	\subsection{Results on ImageNet}
	\textbf{Settings.} We choose ResNet-18 and ResNet-50 as our baseline models. The TSVD energy threshold $e$ is set as 0.005. $\lambda$ of nuclear norm regularization is 0.0003 for both ResNet-18 and ResNet-50. We decompose both the $3\times 3$ and $1\times 1$ Convolution layers in ResNet-50. TRP1 is the channel-wise decomposition and TRP2 is the spatial-wise decomposition.
		
	\begin{figure*}[htb]
		\centering
		\subfigure[Channel-wise decomposition]{
			\includegraphics[width=2.8in]{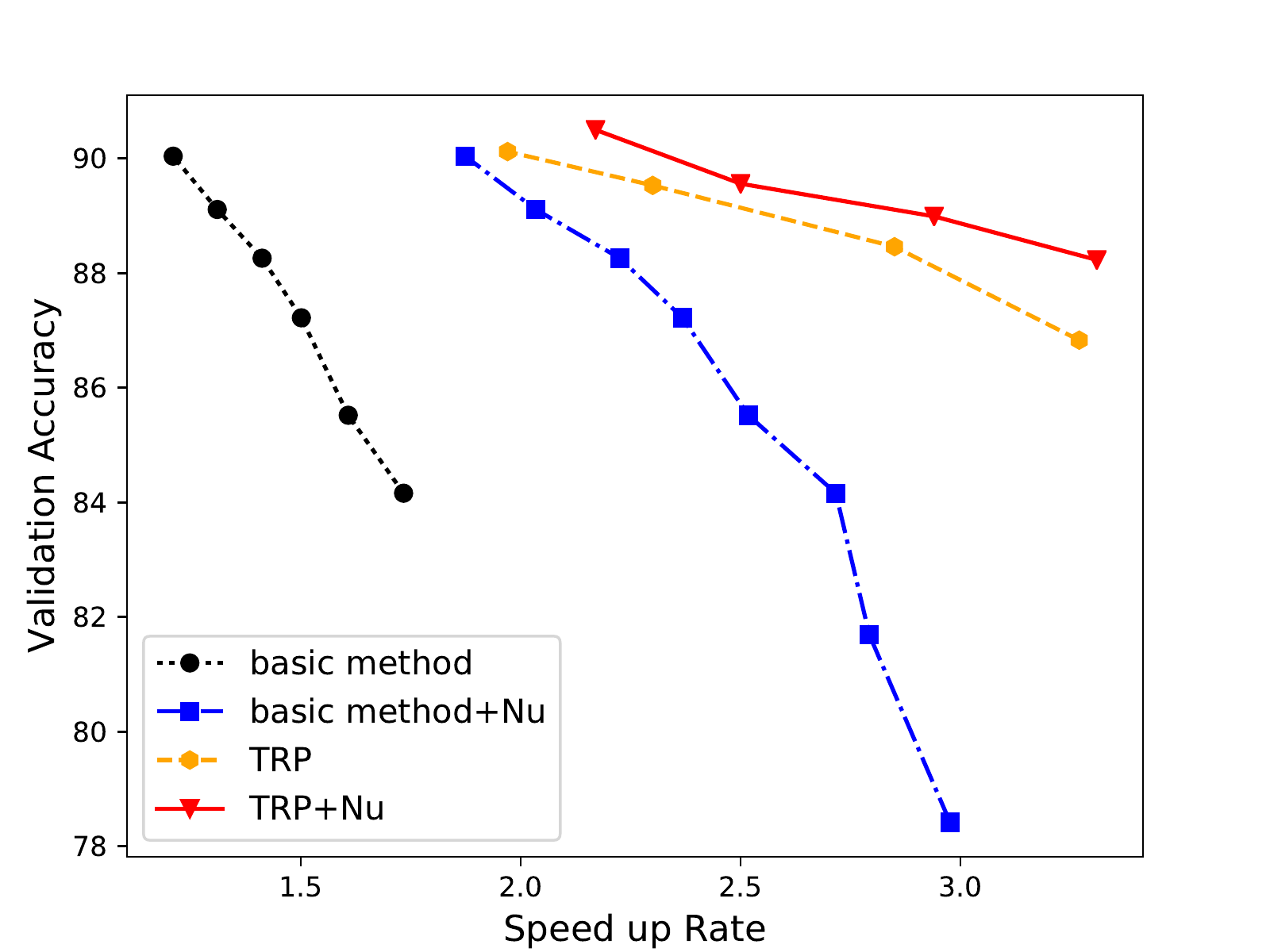}}
		\subfigure[Spatial-wise decomposition]{
			\includegraphics[width=2.8in]{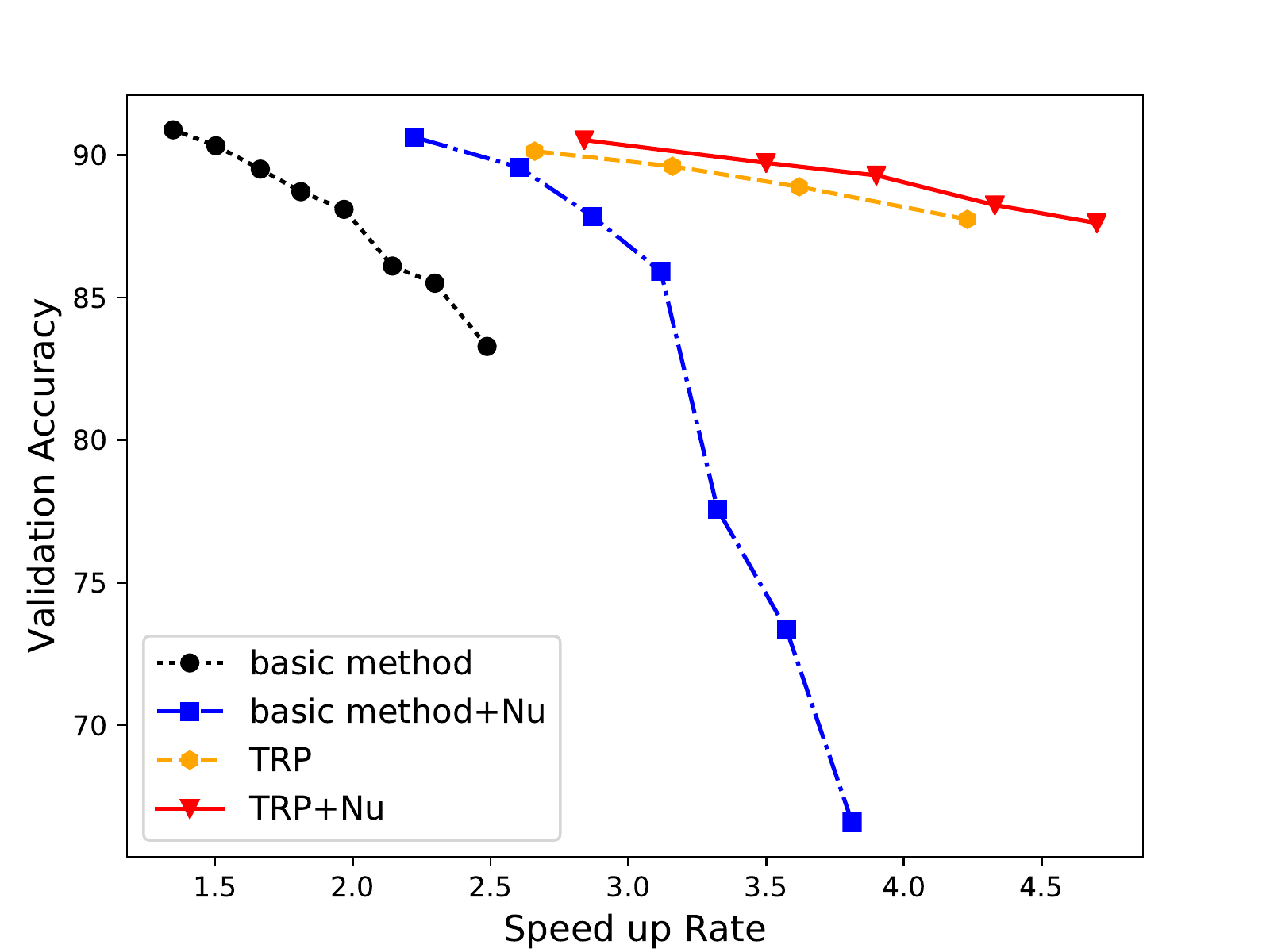}}
		\caption{Ablation study on ResNet-20. Basic methods are data-independent decomposition methods (channel or spatial) with finetuning.}
		\label{fig.6}\vspace*{-0.4cm}
	\end{figure*}

	\textbf{Results.} The results on ImageNet are shown in Table~\ref{tab2} and Table~\ref{tab3}. For ResNet-18, our method outperforms the basic methods~\cite{zhang2016accelerating,jaderberg2014speeding}. For example, in the channel-wise decomposition, TRP obtains 1.81$\times$ speed up rate with 86.48\% Top5 accuracy on ImageNet which outperforms both the data-driven~\cite{zhang2016accelerating}\textsuperscript{1} and data independent~\cite{zhang2016accelerating} methods by a large margin. Nuclear regularization can increase the speed up rates with the same accuracy.
	
	For ResNet-50, to better validate the effectiveness of our method, we also compare TRP with pruning based methods. With $1.80\times$ speed up, our decomposed ResNet-50 can obtain $74.06\%$ Top1 and $92.07\%$ Top5 accuracy which is much higher than \cite{Luo2017ThiNetAF}. The TRP achieves $2.23\times$ acceleration which is higher than \cite{He_2017_ICCV} with the same $1.4\%$ Top5 degrade. Besides, with the same $2.30\times$ acceleration rate, our performance is better than \cite{Zhou_2019_ICCV}.
	
	\subsection{Rank Variation}
	
	To analyze the variation of rank distribution during training, we further conduct an experiment on the CIFAR-10 dataset with ResNet-20 and extract the weight from the \emph{res3-1-2} convolution layer with channel-wise decomposition as our TRP scheme. After each TSVD, we compute the normalized energy ratio $ER(i)$ for each singular value $\sigma_i$ as Eq. (\ref{eq:energy}).
	
	\begin{equation}\label{eq:energy}
	ER(i) = \frac{\sigma_i^2}{\sum_{j=0}^{rank(T^z)}\sigma_j^2}
	\end{equation}
	we record for totally $40$ iterations of TSVD with period $m=20$, which is equal to $800$ training iterations, and our energy theshold $e$ is pre-defined as $0.05$. 
	Then we visualize the variation of $ER$ in Fig.~\ref{fig.5}. During our training, we observe that the theoretic bound value $\max_t\frac{mG}{||W^t||_F} \approx 0.092 < \sqrt{e} \approx 0.223$, which indicates that our basic assumption in theorem \ref{thm:2} always holds for the initial training stage.

	And this phenomenon is also reflected in Fig.~\ref{fig.5}, at the beginning, the energy distribution is almost uniform w.r.t each singular value, and the number of dropped singular values increases after each TSVD iteration 
	and the energy distribution becomes more dense among singular values with smaller index. Finally, the rank distribution converges to a certain point where the smallest energy ratio exactly reaches our threshold $e$ and TSVD will not cut more singular values.

	\subsection{Ablation Study}

	In order to show the effectiveness of different components of our method, we compare four training schemes, basic methods \cite{zhang2016accelerating,jaderberg2014speeding}, basic methods combined with nuclear norm regularization, TRP and TRP combined with nuclear norm regularization. The results are shown in Fig.~\ref{fig.6}. We can have following observations:
	
	(1) \emph{Nuclear norm regularization} After combining nuclear norm regularization, basic methods improve by a large margin. Since Nuclear norm regularization constrains the filters into low rank space, the loss caused by TSVD is smaller than the basic methods.
	
	(2) \emph{Trained rank pruning} As depicted in Fig.~\ref{fig.6}, when the speed up rate increases, the performance of basic methods and basic methods combined with nuclear norm regularization degrades sharply. However, the proposed TRP degrades very slowly. This indicates that by reusing the capacity of the network, TRP can learn a better low-rank feature representations than basic methods. The gain of nuclear norm regularization on TRP is not as big as basic methods because TRP has already induced the parameters into low-rank space by embedding TSVD in training process.
	\begin{table}[htb]
		\begin{center}
			\begin{tabular}{|l|c|c|}
				\hline
				Model & GPU time (ms)&CPU time (ms)\\
				\hline\hline
				Baseline&0.45&118.02\\
				\hline
				TRP1+Nu (channel)&0.33&64.75\\
				\hline
				TRP2+Nu (spatial)&0.31&49.88\\
				\hline
			\end{tabular}
		\end{center}
		\caption{Actual inference time per image on ResNet-18.}\label{tab4}
	\end{table}
	\vspace*{-0.2cm}
	\subsection{Runtime Speed up of Decomposed Networks}
	We further evaluate the actual runtime speed up of the compressed Network as shown in Table \ref{tab4}. Our experiment is conducted on a platform with one Nvidia 1080Ti GPU and Xeon E5-2630 CPU. The models we used are the original ResNet-18 and decomposed models by TRP1+Nu and TRP2+Nu. 
	From the results, we observe that on CPU our TRP scheme achieves more salient acceleration performance. Overall the spatial decomposition combined with our TRP+Nu scheme has better performance. Because cuDNN is not friendly for $1\times 3$ and $3\times1$ kernels, the actual speed up of spatial-wise decomposition is not as obvious as the reduction of FLOPs.
	
	\section{Conclusion}
	
	In this paper, we proposed a new scheme Trained Rank Pruning (TRP) for training low-rank networks. It leverages capacity and structure of the original network by embedding the low-rank approximation in the training process. Furthermore, we propose stochastic sub-gradient descent optimized nuclear norm regularization to boost the TRP. The proposed TRP can be incorporated with any low-rank decomposition method. On CIFAR-10 and ImageNet datasets, we have shown that our methods can outperform basic methods and other pruning based methods both in channel-wise decmposition and spatial-wise decomposition.
	\clearpage
	\bibliographystyle{named}
	\bibliography{ijcai20}
	\appendix

\end{document}